\documentclass[10pt, a4paper]{article}
\usepackage[margin=1.1in]{geometry} 

\usepackage{natbib}
\usepackage{amssymb}
\usepackage{amsmath}
\usepackage{amsthm}
\usepackage{bbm}
\usepackage{times}
\usepackage[normalem]{ulem}

\usepackage{hyperref}

\usepackage[colorinlistoftodos, textwidth=4cm, shadow]{todonotes}

\usepackage{array}
\usepackage{multirow}

\newcommand{\ST}[1]{{\color{black}#1}}

\renewcommand{\S}{\mathcal{S}}
\newcommand{\A}{\mathcal{A}}

\newcommand{\R}{\mathbb{R}}
\renewcommand{\P}{\mathbb{P}}
\newcommand{\E}{\mathbb{E}}
\newcommand{\NN}{\mathbb{N}}

\newcommand{\RSQVI}{RS-QVI}

\newcommand{\argmax}{\arg\!\max}

\newcommand{\cA}{\mathcal{A}}

\newcommand{\cO}{\mathcal{O}}

\newcommand{\cS}{\mathcal{S}}
\newcommand{\cT}{\mathcal{T}}

\newcommand{\cX}{\mathcal{X}}

\newtheorem{theorem}{Theorem}

\newtheorem{definition}{Definition}
\newtheorem{lemma}{Lemma}

\newtheorem{remark}{Remark}

\usepackage[linesnumbered,ruled,vlined]{algorithm2e}
\usepackage{mathtools}

\usepackage[toc,page,header]{appendix}
\usepackage{minitoc}

\title{Near-Optimal Sample Complexity for Recursive Entropic Risk Reinforcement Learning with a Generative Model}

\author{Amirparsa Bahrami\footnote{School of Electrical Engineering, Sharif University of Technology. Email: {\tt\small amirparsa2828@gmail.com}.}
\and Oliver Mortensen\footnote{Department of Computer Science, University of Copenhagen. Email: {\tt\small olmo@di.ku.dk}.}
        \and Mohammad Sadegh Talebi\footnote{Department of Computer Science, University of Copenhagen. Email: {\tt\small sadegh.talebi@di.ku.dk}.}}

\date{\today}

\begin{document}
\maketitle

\begin{abstract}
In this paper, we study the sample complexities of value and policy learning in finite discounted Markov decision processes (MDPs) under recursive entropic risk preferences with risk parameter \(\beta\neq 0\), assuming access to a generative model of the MDP. We provide a refined analysis of model-based risk-sensitive Q-value iteration (MB-RS-QVI), a plug-in model-based method introduced in prior work, and derive \((\varepsilon,\delta)\)-PAC guarantees for both learning the optimal \(Q\)-value function and an \(\varepsilon\)-optimal policy. Our bounds improve the exponential dependence on the effective horizon \(1/(1-\gamma)\) compared with the best existing guarantees for this setting. In particular, they match the existing lower bounds in their exponential dependence on \(|\beta|/(1-\gamma)\), as well as in \(S\), \(A\), \(\varepsilon\), and \(|\beta|\), up to logarithmic factors. Consequently, our analysis removes the exponential gap between the previously known upper and lower bounds, leaving only a polynomial gap in the effective horizon.
\end{abstract}

\section{Introduction}
\label{Section:Introduction}

Reinforcement learning (RL) traditionally deals with \emph{risk-neutral} decision making under uncertainty, where the underlying objective is the expected return, namely, the (possibly discounted) sum of rewards \citep{sutton1998reinforcement}. 
However, this is arguably inadequate for many high-stakes application domains, such as operations research \citep{delage2010percentile}, healthcare \citep{ernst2006clinical}, and transportation \citep{kamran2020risk}. Such applications must account for the variability of returns and the associated risks.  One principled approach to addressing this limitation is to optimize a risk measure of the return distribution. 
Notable risk measures considered in the literature include mean-variance \citep{li2000optimal}, value-at-risk (VaR) \citep{dempster2002risk}, Conditional VaR (CVaR) \citep{shapiro2021LNs}, entropic risk measure (ERM)  \citep{howard1972risk}, and entropic VaR (EVaR) \citep{ahmadi2012entropic}, all of which have been applied in numerous application domains. Among these, CVaR and ERM have received great attention from the community for modeling risk-sensitivity in MDPs \citep{chow2014algorithms,bisi2022risk,bauerle_2011_markov,howard1972risk,borkar2002risk,hau2023entropic,fei2020risk}. However, much of the existing literature studies undiscounted problems, despite the relevance of discounted MDPs; see, e.g., \cite{bauerle_more_2014,hau2023entropic,mortensen2026recursive} for notable exceptions.

In this paper, we study risk-sensitive RL in discounted MDPs, assuming access to a generative model of the MDP, which is a simulator that generates samples from the true MDP for arbitrary state-action pairs. We consider a decision-making scenario where the agent's objective is defined using the entropic risk measure (ERM), \emph{recursively} applied to the reward-to-go at every step $t$ to the decision making. This is also called iterated, nested, or dynamic formulation in the various communities \citep{asienkiewicz2017note,bauerle_markov_2022,deng2025near,mortensen2026recursive}. This formulation stands in contrast to the \emph{non-recursive} (also called non-iterated or static) formulation, where the risk measure is directly applied to the total return (e.g., $\sum_{t=0}^\infty \gamma^t r_t$ in the discounted case) 
\citep{borkar2002q,borkar2002risk,hau2023dynamic}. The two formulations are not directly comparable and each may be preferred depending on the underlying application; we refer to, e.g., \citep{deng2025near,mortensen2026recursive}, for further discussion. 

Specifically, we consider risk-sensitive RL in finite tabular discounted MDPs under recursive ERM, assuming access to a generative model of the MDP --i.e., a simulator that generates samples from the true MDP for arbitrary state-action pairs. 
Learning performance is assessed in terms of sample complexity, defined as the total number $T$ of samples required, for given $(\varepsilon,\delta)$, to obtain either an $\varepsilon$-optimal policy (\emph{policy learning}) or an $\varepsilon$-close approximation of the optimal Q-value in the max-norm (\emph{value learning}), with probability exceeding $1-\delta$.

Our main contribution is to provide a refined sample complexity bound for Model-Based ERM Q-Value Iteration (MB-\RSQVI) \citep{mortensen2026recursive}, which is a model-based approach relying on a plug-in estimator. 
We show that the sample complexity of MB-\RSQVI\ for value learning (Theorem \ref{thm:MBRSQVI_UB_Q}) and policy learning (Theorem \ref{thm:MBRSQVI_UB_policy}) scales as\footnote{
Throughout the paper, $\widetilde{\mathcal O}$ and $\widetilde{\Omega}$ suppress logarithmic factors in the relevant problem parameters.} 
\begin{align*}
    \widetilde{O}\bigg(\frac{SA}{\varepsilon^2(1-\gamma)^2|\beta|^2}e^{|\beta|/(1-\gamma)}\bigg) \qquad\hbox{and}\qquad \widetilde{O}\bigg(\frac{SA}{\varepsilon^2(1-\gamma)^2|\beta|^2}\min\Big\{S,\frac{1}{(1-\gamma)^2}\Big\}e^{|\beta|/(1-\gamma)}\bigg)\,,
\end{align*}
respectively. 
%
These bounds hold for any discounted MDP with $S$ states, $A$ actions, discount factor $\gamma$, and risk parameter $\beta$, with $\beta>0$ (respectively, $\beta<0$) corresponding to a risk-averse (respectively, risk-seeking) agent; see Section \ref{Section:Background} for a precise definition. 
Moreover, the bounds are valid over the entire $\varepsilon$-range, namely $\varepsilon\in(0,\frac{1}{1-\gamma}]$. These bounds has an exponentially improved dependency on the effective horizon $1/(1-\gamma)$ when compared to the best existing bounds for value and policy learning under recursive ERM in discounted MDPs presented by \cite{mortensen2026recursive}. Our PAC bounds exhibit the same exponential dependence on the effective horizon, as predicted by lower bounds of \citep{mortensen2026recursive}
for value and policy learning, which scale as 
\begin{align*}
    \widetilde{\Omega}\bigg(\frac{SA}{\varepsilon^2|\beta|^2}e^{|\beta|/(1-\gamma)}\bigg). 
\end{align*}


We refer to  Table \ref{tab:bounds_summary} for a summary of our results and comparison to existing bounds.

\begin{table*}[!t]
\centering

\newcommand{\rowstrut}{\rule[-2ex]{0pt}{5.2ex}}

\resizebox{\textwidth}{!}{%
\begin{tabular}{|c|c|c|}
\hline
Problem & Upper Bound & Lower Bound \\ \hline

\multirow{2}{*}{Value Learning}
&
\rowstrut
$\widetilde{\mathcal{O}}\Big(
\frac{SA}{\varepsilon^2(1-\gamma)^2|\beta|^2}
e^{|\beta|/(1-\gamma)}
\Big)$
{[Thm.~\ref{thm:MBRSQVI_UB_Q}]}
&
\multirow{2}{*}{
\begin{tabular}[c]{@{}c@{}}
$\widetilde\Omega\Big(
\frac{SA}{\varepsilon^2|\beta|^2}
e^{|\beta|/(1-\gamma)}
\Big)$\\
{[\citealp{mortensen2026recursive}, Thm.~3]}
\end{tabular}
}
\\ \cline{2-2}

&
\rowstrut
$\widetilde{\mathcal{O}}\Big(
\frac{SA}{\varepsilon^2(1-\gamma)^2|\beta|^2}
e^{2|\beta|/(1-\gamma)}
\Big)$
{[\citealp{mortensen2026recursive}, Thm.~1]}
&
\\ \hline

\multirow{2}{*}{Policy Learning}
&
\rowstrut
$\widetilde{\mathcal{O}}\Big(
\frac{SA}{\varepsilon^2(1-\gamma)^2|\beta|^2}
\min\big\{S,\frac{1}{(1-\gamma)^2}\big\}
e^{|\beta|/(1-\gamma)}
\Big)$
{[Thm.~\ref{thm:MBRSQVI_UB_policy}]}
&
\multirow{2}{*}{
\begin{tabular}[c]{@{}c@{}}
$\widetilde\Omega\Big(
\frac{SA}{\varepsilon^2|\beta|^2}
e^{|\beta|/(1-\gamma)}
\Big)$\\
{[\citealp{mortensen2026recursive}, Thm.~4]}
\end{tabular}
}
\\ \cline{2-2}

&
\rowstrut
$\widetilde{\mathcal{O}}\Big(
\frac{SA}{\varepsilon^2(1-\gamma)^2|\beta|^2}
\min\big\{S,\frac{1}{(1-\gamma)^2}\big\}
e^{2|\beta|/(1-\gamma)}
\Big)$
{[\citealp{mortensen2026recursive}, Thm.~2]}
&
\\ \hline

\end{tabular}%
}

\caption{Summary and comparison of sample complexity bounds for recursive ERM in the small-$\varepsilon$ regime, where the quadratic terms in $1/\varepsilon$ dominate. $\beta$ denotes the risk parameter, where $\beta>0$ (respectively, $\beta<0$) corresponds to a risk-averse (respectively, risk-seeking) agent.}

\label{tab:bounds_summary}

\end{table*}

\section{Related Work}
\label{Section:RelatedWork}

\paragraph{Risk-neutral discounted RL.} There exists a rich literature on provably efficient RL in tabular discounted MDPs, covering a variety of settings such as  the generative setting \citep{kakade2003sample,gheshlaghi2013minimax,agarwal_model-based_2020,sidford2018variance,li2020breaking,jin2024truncated}, the offline (or batch) setting \citep{rashidinejad2021bridging,li2024settling}, and the online setting \citep{strehl_analysis_2008,lattimore_near-optimal_2014}. 
In the generative setting, notable work includes \citep{kakade2003sample,kearns1998phased_Q_learning,gheshlaghi2013minimax,sidford2018variance,wang2020randomized,li2024breaking,jin2024truncated}. 
\citet{gheshlaghi2013minimax} provide the first minimax-optimal sample complexity bounds of $\widetilde{\cal O}\big(\frac{SA}{\varepsilon^2(1-\gamma)^3}\big)$ for both value and policy learning, accompanied by a lower bound of $\widetilde{\Omega}\big(\frac{SA}{\varepsilon^2(1-\gamma)^3}\big)$ for value learning. The lower bound was argued to be valid for the policy learning setting as well (up to logarithmic factors). Among the cited papers above, \citep{sidford2018variance,wang2020randomized,jin2024truncated} study model-free methods. 
\citet{li2024breaking} derive an optimal bound valid for the entire  $\varepsilon$-range, using model-based methods constructed from the empirical MDP but with reward perturbations or conservative planning. Existing optimal sample-complexity analyses rely on techniques that crucially exploit the additive structure of the return. As a result, these techniques generally do not extend to risk-sensitive objectives.

\paragraph{Risk-sensitive RL.} 
A substantial literature studies decision making under risk measures in bandit and RL settings. For RL,  representative examples include work on CVaR \citep{deng2025near,du2023provably,chen2024provably,lam2022risk},  entropic risk \citep{borkar2002risk,moharrami2025policy,marthe2025efficient,hau2023entropic}, and  mean-variance risk \citep{huang2022achieving}. 
In particular, under recursive CVaR, \cite{deng2025near} analyze sample complexity in the generative setting and provide a lower bound; however, its analysis relies on structural properties specific to CVaR and does not extend to ERM.

ERM has also been widely studied across diverse settings \citep{borkar2002risk,sadana2024mitigating,fei2020risk,fei2021risk,fei2021exponential,marthe2025efficient,moharrami2025policy,hau2023entropic}. Most of these papers study the non-recursive setting. 
Under recursive ERM, most of the existing work studies online episodic RL in the regret setting (e.g., \citet{fei2020risk,fei2021exponential,fei2021risk,hu2023tighter,liang2024regret}). For the discounted setting, \cite{mortensen2026recursive} provide PAC upper and lower bounds in the generative setting, which is considered the closet work to the present paper. 
More recently, \citet{essakine26tight} study policy learning under the entropic risk measure in finite-horizon MDPs with a forward interaction model and derive tight sample-complexity guarantees for this setting. Their learning problem differs substantially from ours: it is finite-horizon and stage-dependent, and samples are collected along trajectories. In contrast, we consider stationary discounted infinite-horizon MDPs under a generative model.

Another line of work develops algorithms for entire families of risk measures. Two notable families studied in this context are coherent risk measures and optimized certainty equivalent (OCE) measures, both of which include important special cases such as mean–variance and CVaR. While ERM  is not coherent, it belongs to the OCE class for $\beta>0$
For coherent risk measures,  we refer to \citep{petrik2012approximate,tamar2015policy,lam2022risk,zhao2024ra}. For OCEs, 
notable existing results  include \citep{wang2025reductions,xu2023regret,rigter2023one,lee2025risk,mortensen2026OCE}. \cite{mortensen2026OCE} study learning under recursive OCEs in discounted MDPs and provide PAC upper and lower sample complexity bounds for both value and policy learning. However, their bounds do not improve those of ERM, when applied to this setting.  The other cited papers either do not provide provably sample-efficient learning guarantees or do not study discounted MDPs. Among the closest works,   \cite{rigter2023one} consider offline RL in discounted MDPs under recursive OCE but do not provide sample-complexity guarantees.

\section{Background}
\label{Section:Background}

\paragraph{Notations.} For $n \in \NN$, let $[n] := \{1, \ldots, n\}$. $\mathbbmss 1_A$ denotes the indicator function of an event $A$. Given a set $\cX$, $\Delta(\cX)$ denotes the probability simplex over $\cX$. We use the convention that $\|\cdot \| := \|\cdot \|_\infty$ and explicitly use the subscript $\|\cdot \|_p$ when using $p$-norms for  $1\leq p<\infty$. The notation $L^\infty(\Omega, \mathcal{F},\mathbb{P})$ denotes the space of essentially bounded random variables on the probability space $(\Omega, \mathcal{F},\mathbb{P})$.

\subsection{Entropic Risk Preferences}
The entropic risk measure (ERM) is rooted in expected utility theory \citep{mas1995microeconomic}. Consider for $\beta\neq0$ the class of utility functions $u(t) = \frac{1}{\beta}(1-e^{-\beta t})$ defined for $t\in \mathbb{R}$. The utility $u$ is supposed to describe the preferences of some economic agent in the form of how much utility $u(t)$ she derives from some monetary quantity $t\in \mathbb{R}$. For any bounded random variable $X\in L^\infty(\Omega, \mathcal{F},\mathbb{P})$, it is easy to verify that the associated \emph{certainty equivalent} to $u$ is $u^{-1}(\E[u(X)]) = \frac{-1}{\beta}\log(\E[e^{-\beta X}])$, which expresses the amount of money that would give the same utility as that of entering in the bet given by the random variable $X$. We thus define the functional $\rho : L^\infty(\Omega,\mathcal{F},\mathbb{P}) \rightarrow \mathbb{R}$ by
\begin{align}
\label{eq:ERM_functional}
    \rho(X) :=\rho(X; \beta):=-\frac{1}{\beta}\log\big(\E[e^{-\beta X}]\big)\,.
\end{align}
We note that there is a lack of consensus regarding the sign of $\beta$ in the definition of ERM. We follow this convention considering its widespread use in the actuarial literature \citep{asienkiewicz2017note}.

It is clear that we recover the risk-neutral case for $\beta \rightarrow 0$: $\lim_{\beta\to 0} \rho(X) = \E[X]$. We note that $\rho$ admits the following properties: 
\begin{align}
    \label{monotonicity_Phi}
    &\rho(X) \leq \rho(Y), \quad \text{for any }X\leq Y,
    \\
    \label{consistency_Phi}
    &\rho(c) = c, \quad \text{for any }c\in \mathbb{R},
    \\
    \label{jensen_beta>0}
    & \rho(X) \leq \E[X], \quad \text{for }\beta>0,
    \\
    \label{jensen_beta<0}
    & \rho(X) \geq \E[X], \quad \text{for }\beta<0,
\end{align}
where properties (\ref{jensen_beta>0})-(\ref{jensen_beta<0}) follow from Jensen's inequality. Using $\rho$ as a measure of the preference for different random variables, it follows directly from (\ref{consistency_Phi})-(\ref{jensen_beta<0}) that $\rho(X)\leq \rho(\E[X])$ for $\beta>0$ and that $\rho(X)\geq \rho(\E[X])$ for $\beta<0$. It further shows that $\beta>0$ is associated with \emph{risk-aversion}, while $\beta<0$ is associated with a \emph{risk-seeking} behavior. It is well-known that  ERM, unlike CVaR, does not belong to the nice class of coherent risk measures;  we refer the reader to Appendix A in \citep{mortensen2026recursive} for a primer on risk measures.  

\subsection{Discounted Markov Decision Processes with Entropic Risk}

A finite, discounted infinite-horizon Markov decision process (MDP) is a $5$-tuple $M=(\S,\A,P,R,\gamma)$, where $\S = \{1,2,\ldots,S\}$ is the finite state space of size $S:=|\S|$, $\A = \{1,2,\ldots,A\}$ is the finite action space of size $A:=|\A|$, $P:\S\times \A\rightarrow \Delta(\S)$ is the transition probability function, $R:\S\times \A\rightarrow [0,1]$ is the deterministic reward function, and $\gamma \in (0,1)$ is the discount factor.  We write $P(s'|s, a)$ to denote the transition probability to state $s'\in \S$ under pair $(s,a)$, and often use $P_{s,a}(s')$ as a corresponding shorthand notation. Finally, a stationary deterministic policy is a map $\pi:\S\rightarrow \A$. 

The agents interaction with the MDP $M$ is as follows. At initialization of the process, $M$ is in some initial state $s_0\in \S$. At each time $t\geq 0$, the agent is in state $s_t\in \S$ and decides on an action $a_t\in \A$. The MDP generates a reward $r_t:=R(s_t,a_t)$ and a next-state $s_{t+1}\sim P(\cdot|s_t,a_t)$. The MDP moves to $s_{t+1}$ when the next time slot begins, and this process continues ad infinitum. This process yields a growing sequence $(s_t,a_t,r_t)_{t\ge 0}$. 

Let $v\in \mathbb{R}^S$ and $\pi:\S\rightarrow \A$ be a stationary deterministic policy. Inspired by the definition of ERM functional in Eq.~(\ref{eq:ERM_functional}), We  define $\rho_{s,a}:\mathbb{R}^S\rightarrow \mathbb{R}$ as 
\begin{align}
    \rho_{s,a}(v) = -\frac{1}{\beta}\log\Big(\E_{s'\sim P_{s,a}}[e^{-\beta v(s')}] \Big),
\end{align}
and slightly abusing the notation, we write $\rho_{s,\pi}$ when $a=\pi(s)$, i.e., $\rho_{s,\pi}:=\rho_{s,\pi(s)}$.

The agent's goal is to maximize a risk-sensitive objective of the collected rewards $(r_t)_{t\ge 0}$, depending on both $\gamma$ and $\rho$. In the classical setting,  the value of a policy $\pi$ is defined as discounted sum of rewards collected under $\pi$. 
Under recursive ERM, the objective is defined through the iteration of $\rho$ \citep{bauerle_markov_2024}. The value function $V^\pi$ and the state-action value function (or Q-value) $Q^\pi$ of a policy $\pi$ are defined informally as 
\begin{align*}
  V^\pi(s_0) & =r_0+\gamma\rho_{s_0,\pi(s_0)}\Big(r_1+\gamma\rho_{s_1,\pi(s_1)}\big(r_2 + \gamma \rho_{s_2,\pi(s_2)}(\ldots)\big)\Big)\,,
    \\
  Q^\pi(s_0,a) & =R(s_0,a)+\gamma\rho_{s_0,a}\Big(r_1+\gamma\rho_{s_1,\pi(s_1)}\Big(r_2 +\gamma \rho_{s_2,\pi(s_2)}(\ldots)\Big)\Big)\,,\quad \forall a\in \A,
\end{align*}
with $r_t:=R(s_t,\pi(s_t))$. For all $(s,a)$, let $V^*(s):=\sup_{\pi}V^\pi(s)$ and $Q^*(s,a):=\sup_\pi Q^\pi(s,a)$ denote the optimal value (at state $s$) and Q-value (at state-action $(s,a)$), respectively, where $\sup$ is taken over the set of all possible policies (potentially randomized and history-dependent). Any policy $\pi^*$ satisfying $V^{\pi^*}(s)=V^*(s)$ for all $s\in \S$ is called optimal, and any policy $\pi$ obeying $V^\pi(s)\ge V^*(s) - \varepsilon$ for a given $\varepsilon>0$ is called $\varepsilon$-optimal. 
%
As established in \cite{bauerle_markov_2024}, under a stationary $\pi$, 
\begin{align*}
    V^\pi(s)& =R(s,\pi(s))+\gamma \rho_{s,\pi(s)}(V^\pi), \quad &&\forall s\in \S.
    \\
    Q^\pi(s,a)& =\!R(s,a)+\gamma\rho_{s,a}\big(Q^\pi(\cdot, \pi(\cdot))\big), \quad &&\forall (s,a)\in \cS\times \cA.
\end{align*}
Furthermore, there exists a stationary deterministic optimal policy $\pi^*:\cS\to\cA$ that achieves $V^*(s)$ for all states $s$ simultaneously, and the optimal value functions satisfy the Bellman optimality equations:   
\begin{align*}
    V^*(s)& =\max_{a\in \A}\big(R(s,a)+\gamma \rho_{s,a}(V^*)\big), \quad &&\forall s\in \S.
    \\
    Q^*(s,a)& =\!R(s,a)+\gamma\rho_{s,a}\big(\max_{a'\in \A}Q^*(\cdot,a')\big), \quad &&\forall (s,a)\in \cS\times \cA.
\end{align*}
Let us 
introduce the operators $\mathcal{T}^\pi,\mathcal{T}:\mathbb{R}^{S\!\times\!A}\rightarrow \mathbb{R}^{S\times A}$, which for $f:\cS\!\times\!\cA \rightarrow \mathbb{R}$ are defined as 
\begin{align*}
    (\mathcal{T}^{\pi}f)(s,a) &:=R(s,a)+\gamma\rho_{s,a}\big(f(\cdot,\pi(\cdot))\big), \\ 
    (\mathcal T f)(s,a) & :=R(s,a)+\gamma\rho_{s,a}\big(\max_{a'\in \A}f(\cdot,a')\big). 
\end{align*}
Then, the Bellman equations  for the $Q$-function can therefore be written as $Q^* = \cT Q^*$ and $Q^\pi = \cT^\pi Q^\pi$. The operators $\mathcal{T}$ and $\mathcal{T}^\pi$ are $\gamma$-contractions with respect to the max-norm (see, \citep[Lemma~1]{mortensen2026recursive}). Hence, $Q^*$ can be efficiently approximated arbitrarily well by a value-iteration-type algorithm.

\paragraph{Learning Performance.}
We consider RL algorithms that aim to find an $\varepsilon$-optimal policy or an $\varepsilon$-optimal value function for input $\varepsilon>0$  under ERM, while having access to a generative model (or simulator) of the MDP. Precisely speaking, the generative model can produce a sample $s'\sim P_{s,a}$ for any queried state-action $(s,a)$. We consider two types of such algorithms, which we generically denote by $\mathcal U$: 
The first type outputs a $Q$-value $Q_T^{\mathcal{U}}:\S\times \A \rightarrow \R$, whereas the second outputs a policy $\pi_T^\mathcal{U}:\cS\rightarrow \A$ using $T$  samples.  
We evaluate the quality of an algorithm that outputs a $Q$-value by $\|Q^*-Q_T^\mathcal{U}\|$. For an algorithm that instead outputs a policy, we evaluate it in terms of $\|V^*-V^{\pi_T^\mathcal{U}}\|$. Often, we will suppress $T$ from the notation. This leads to the notion of $(\varepsilon,\delta)$-correct value and policy for input parameters $(\varepsilon, \delta)$\ST{, similar to risk-neutral RL}: 

\begin{definition}[$(\varepsilon,\delta)$-correct value and policy]
    An algorithm $\mathcal{U}$ that outputs a $Q$-value $Q^\mathcal{U}$ is called \emph{$(\varepsilon,\delta)$-value-correct} on a set of MDPs $\mathbb{M}$ if $\P(\|Q^*-Q^{\mathcal{U}}\|\leq \varepsilon)\geq 1-\delta$ for all $M\in \mathbb{M}$. Similarly, an algorithm $\mathcal{U}$ that outputs a policy $\pi^\mathcal{U}$ is called \emph{$(\varepsilon,\delta)$-policy-correct} on a set of MDPs $\mathbb{M}$ if $\P(\|V^*-V^{\mathcal{\pi^\mathcal{U}}}\|\leq \varepsilon)\geq 1-\delta$ for all $M\in \mathbb{M}$.
\end{definition}

The notion of $(\varepsilon,\delta)$-value-correctness yields a sample complexity notion in the case of \emph{value learning}, while $(\varepsilon,\delta)$-policy-correctness serves a similar role for \emph{policy learning}.

\section{Model-Based ERM Q-Value Iteration}
\label{Section:Model-basedVI}
In this section, we recall the MB-\RSQVI\ algorithm \citep{mortensen2026recursive}, which is a model-based algorithm for value and policy learning settings under ERM in the generative setting. 

MB-\RSQVI\ obtains $N$ samples from each state-action pair in $Z=\cS\times \cA$, which is done by making $T=NSA$ calls to the generative model. Let $\widehat{P}$ denote the plug-in estimator built using the $T=NSA$ independent samples obtained from the generative model; that is, for $(s,a,s')\in Z\times \S$, $\widehat{P}(s'|s,a) = \frac{n(s,a,s')}{N}$, where $n(s,a,s')$ denotes the number of times $s'$ was observed under the queried pair $(s,a)\in Z$. The model-based approach we describe relies on the empirical MDP formulated using $\widehat P$, $\widehat M=(\S,\A,R,\widehat{P},\gamma,\beta)$, but it is otherwise general in the sense that it can use any oracle that outputs an $\varepsilon$-optimal policy for any $\varepsilon>0$. 
Such an oracle can be implemented using a Q-value iteration method akin to that of the classical risk-neutral setting; see \citep[Lemma 2]{mortensen2026recursive}, recalled below as Lemma 1.

The pseudocode is presented in Algorithm \ref{alg:MB-RS-QVI}.

\begin{algorithm}[H]
\label{alg:MB-RS-QVI}
\SetAlgoLined
\SetKwInput{KwInput}{Input}                
\SetKwInput{KwOutput}{Output}              
\DontPrintSemicolon

  \KwInput{Number $k$ of iterations}  
  Initialization: $\forall (s,a)$ set $Q(s,a) =0$ \;
  Collect $T$ samples for each $(s,a)$ from the generative model \;
  Build $\widehat P_{s,a}$
  \For{$j=0,1,\ldots,k-1$}{
  \For{all $(s,a)\in \cS\times \cA$}{
$Q_{j+1}(s,a) = R(s,a)-\frac{\gamma}{\beta}\log\big(\E_{s'\sim \widehat P_{s,a}}\big[e^{-\beta \max_{a'}Q_j(s',a')}\big]\big)$ \;\vspace{-4mm}
  }
  }
  $\forall s\in \cS:$ $\pi_k(s) = \argmax_{a\in\cA}Q_k(s,a)$ \;
  \KwRet $Q_k$ and $\pi_k$

 
  \caption{MB-\RSQVI\ \citep{mortensen2026recursive}}
\end{algorithm}

\section{Sample Complexity Analysis of MB-\RSQVI}
\label{sec:sample_complexity_analysis}

In this section, we present a sample complexity analysis of MB-\RSQVI\ under both policy learning and value learning. Theorem \ref{thm:MBRSQVI_UB_Q} states such a result 
for the case of value learning, while Theorem \ref{thm:MBRSQVI_UB_policy} offers a bound for policy learning. 

\begin{theorem}[{Sample complexity, value learning}]\label{thm:MBRSQVI_UB_Q}
For any $\varepsilon>0$, $\delta\in (0,1)$, and any MDP $M$ with $S$ states and $A$ actions, if the learner makes
\begin{align*}
T \ge 8SA\log\!\bigg(\frac{2SA}{\delta}\bigg)\left(e^{|\beta|/(1-\gamma)}-1\right)
\max\Bigg\{&
1,
\frac{4\gamma^2
\left(1- e^{-|\beta|/(1-\gamma)}\right)}
{\varepsilon^2(1-\gamma)^2\beta^2},
\frac{2\gamma}
{3\varepsilon(1-\gamma)|\beta|}
\Bigg\}
\end{align*}
model calls to the generative model, then $\P(\|Q^*-Q_k\|\leq \varepsilon)\geq 1-\delta$.
\end{theorem}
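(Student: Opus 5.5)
We would prove Theorem~\ref{thm:MBRSQVI_UB_Q} by comparing $Q^*$ with the optimal $Q$-value $\widehat Q^*$ of the empirical MDP $\widehat M$, and treating the iteration error $\|\widehat Q^*-Q_k\|$ as negligible. By the contraction property, $\|\widehat Q^*-Q_k\|\le \gamma^k/(1-\gamma)$, and we take $k$ large enough; this is the content of the recalled Lemma~1.

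\textbf{Step 1: reduce to one-step errors at the fixed value $V^*$.} Write $\widehat\rho_{s,a}$ for the ERM functional under $\widehat P_{s,a}$, and $\widehat{\cT}$ for the corresponding Bellman operator. Since $\widehat Q^*=\widehat{\cT}\widehat Q^*$ and $\widehat{\cT}$ is a $\gamma$-contraction,
\[
\|Q^*-\widehat Q^*\|\le \|\cT Q^*-\widehat{\cT}Q^*\|+\|\widehat{\cT}Q^*-\widehat{\cT}\widehat Q^*\|\le \gamma\max_{s,a}|\rho_{s,a}(V^*)-\widehat\rho_{s,a}(V^*)|+\gamma\|Q^*-\widehat Q^*\|.
\]
Hence
\[
\|Q^*-\widehat Q^*\|\le \frac{\gamma}{1-\gamma}\max_{s,a}|\rho_{s,a}(V^*)-\widehat\rho_{s,a}(V^*)|.
\]
The key point is that $V^*$ is deterministic, so no union bound over value functions is needed. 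Only $SA$ scalar concentration events arise, which explains the factor $\log(2SA/\delta)$.

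\textbf{Step 2: concentration of the exponential moment.} Take $\beta>0$; the case $\beta<0$ is symmetric with $e^{-\beta V}$ replaced accordingly. Fix $(s,a)$ and set $X=e^{-\beta V^*(s')}$ with $s'\sim P_{s,a}$.

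Subtract the deterministic shift $c=\min_{s'} V^*(s')$ and use translation equivariance of $\rho$. After normalizing by $e^{-\beta c}$, $X$ lies in $[e^{-\beta/(1-\gamma)},1]$, so the relevant range ratio is at most $e^{|\beta|/(1-\gamma)}$.

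Apply Bernstein's inequality to $\widehat\mu=\frac1N\sum_i X_i$ around $\mu=\E X$. The variance satisfies $\mathrm{Var}(X)\le (1-\mu)(\mu-e^{-\beta/(1-\gamma)})$, which is the Bhatia–Davis inequality. This gives, with probability at least $1-\delta/(SA)$,
\[
|\widehat\mu-\mu|\le \sqrt{2\mathrm{Var}(X)L/N}+2L/(3N),\qquad L=\log(2SA/\delta).
\]

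\textbf{Step 3: pass through the logarithm.} We have
\[
|\rho-\widehat\rho|=\frac1\beta|\log\widehat\mu-\log\mu|\le \frac{|\widehat\mu-\mu|}{\beta\min(\mu,\widehat\mu)}.
\]
The first condition $N\ge 8L(e^{\beta/(1-\gamma)}-1)$ in the max is used to guarantee $\widehat\mu\ge\mu/2$. It also bounds the relative error $|\widehat\mu-\mu|/\mu$ by a constant, possibly with a $\log(1+x)\le x$ refinement.

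Then bound the relative variance:
\[
\frac{\mathrm{Var}(X)}{\mu^2}\le \frac{(1-\mu)(\mu-e^{-\beta/(1-\gamma)})}{\mu^2}\le (e^{\beta/(1-\gamma)}-1)(1-e^{-\beta/(1-\gamma)}),
\]
which holds after optimizing over $\mu\in[e^{-\beta/(1-\gamma)},1]$. This yields exactly the factor $(e^{|\beta|/(1-\gamma)}-1)(1-e^{-|\beta|/(1-\gamma)})$ appearing in the second term of the max. Similarly, the Bernstein linear term divided by $\mu$ gives at most $2L e^{\beta/(1-\gamma)}/(3N)$, matching the third term.

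Requiring each contribution, multiplied by $\gamma/((1-\gamma)\beta)$, to be at most $\varepsilon/2$ and solving for $N=T/(SA)$ gives the three-term max. A union bound over the $SA$ pairs completes the argument.

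\textbf{Main obstacle.} The delicate step is Step 3: controlling $1/\widehat\mu$ and the relative variance sharply enough to get a single factor $e^{|\beta|/(1-\gamma)}$ rather than its square. The earlier $e^{2|\beta|/(1-\gamma)}$ came from a worst-case Hoeffding bound divided by $\mu_{\min}$. We would first try the Bhatia–Davis bound described above, and double-check the constants against the stated $8$, $4\gamma^2$ and $2\gamma/3$.
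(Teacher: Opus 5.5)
Your overall architecture matches the paper's: a one-step Bellman error at a deterministic value function, Bernstein plus the Bhatia--Davis variance bound for $\widehat\mu$, the log-Lipschitz step once $\widehat\mu\ge\mu/2$, and a union bound over the $SA$ pairs. Your Step 1 is slightly cleaner than the paper's. The paper bounds $\|Q^\pi-\widehat Q^\pi\|$ for a fixed $\pi$ and applies it with $\pi=\pi^*$, which literally controls $\widehat Q^{\pi^*}$ rather than $\widehat Q^*$. Contracting with the optimal operator $\widehat{\cT}$ at $Q^*$ instead gives the two-sided bound on $\|Q^*-\widehat Q^*\|$ directly. It involves the same random quantities, since $\cT Q^*=\cT^{\pi^*}Q^*$ and $\widehat{\cT}Q^*=\widehat{\cT}^{\pi^*}Q^*$.

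The gap is in Step 3: with the estimates you describe, the stated $T$ is not sufficient. Write $m=e^{-|\beta|/(1-\gamma)}$ and $L=\log(2SA/\delta)$.

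(i) The linear Bernstein term must use $|X-\mu|\le 1-m$ rather than $1$. Dividing by $\mu\ge m$ then gives $(1-m)/m=e^{|\beta|/(1-\gamma)}-1$, which is what the third term contains. Your $e^{|\beta|/(1-\gamma)}$ is not within a constant factor of this. As $|\beta|\to0$, your requirement $N\gtrsim \gamma L e^{|\beta|/(1-\gamma)}/(\varepsilon(1-\gamma)|\beta|)$ blows up, while the stated $T$ stays bounded.

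(ii) Bounding $(1-\mu)/\mu\le 1/m-1$ and $(\mu-m)/\mu\le 1-m$ separately gives $\mathrm{Var}(X)/\mu^2\le (1-m)^2/m$, which is not the optimum over $\mu$. The paper uses the exact maximum $\max_{\mu\in[m,1]}(1-\mu)(\mu-m)/\mu^2=(1-m)^2/(4m)$, attained at $\mu=2m/(1+m)$. The factor $1/4$ matters. With your bound, even after fixing (i), the condition $N\ge 8L(e^{|\beta|/(1-\gamma)}-1)$ only gives $|\widehat\mu-\mu|/\mu\le 1/2+1/12$. So $\widehat\mu\ge\mu/2$ is not guaranteed, and the $\varepsilon^{-2}$ requirement comes out four times larger than stated.

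(iii) Asking each of the two Bernstein contributions to be at most $\varepsilon/2$ spends the whole budget on $\|Q^*-\widehat Q^*\|$. You then only get $\|Q^*-Q_k\|\le\varepsilon+\gamma^k/(1-\gamma)$, not $\le\varepsilon$. The paper allots $\varepsilon/4$ to each term and $\varepsilon/2$ to the iteration error.

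The apparent match with the constant $32$ comes from (ii) and (iii) cancelling each other. With the range $1-m$, the sharp Bhatia--Davis maximization, and the $\varepsilon/4,\varepsilon/4,\varepsilon/2$ split, your argument coincides with the paper's and reproduces its constants.
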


For sufficiently small $\varepsilon$, the terms of order $\varepsilon^{-2}$ dominate, and the bound simplifies to
\[
T = \widetilde O\!\left(
\frac{SA\gamma^2}{\varepsilon^2(1-\gamma)^2\beta^2}
\left(e^{|\beta|/(1-\gamma)}+e^{-|\beta|/(1-\gamma)}-2\right)
\right).
\]

\begin{theorem}[{Sample complexity, policy learning}]
\label{thm:MBRSQVI_UB_policy}
    For any $\varepsilon>0, \delta\in(0,1)$, and any MDP $M$ with $S$ states and $A$ actions, if the learner makes
    \begin{align*}
    T \ge SA\left(e^{|\beta|/(1-\gamma)}-1\right)
    \min\Bigg\{&
    \log\!\bigg(\frac{4SA|\Pi|}{\delta}\bigg)
    \max\Bigg\{
    8,
    \frac{72\gamma^2
    \left(1- e^{-|\beta|/(1-\gamma)}\right)}
    {\varepsilon^2(1-\gamma)^2\beta^2},\,
    \frac{8\gamma}
    {\varepsilon(1-\gamma)|\beta|}
    \Bigg\},\\
    &\log\!\bigg(\frac{4SA}{\delta}\bigg)
    \max\Bigg\{
    8,
    \frac{128\gamma^4
    \left(1-e^{-|\beta|/(1-\gamma)}\right)}
    {\varepsilon^2(1-\gamma)^4\beta^2},\,
    \frac{32\gamma^2}
    {3\varepsilon(1-\gamma)^2|\beta|}
    \Bigg\}
    \Bigg\}
    \end{align*}
    model calls, then $\P(\|V^*-V^{\pi_k}\|\leq \varepsilon)\geq 1-\delta$. Here, $\Pi$ denotes the set of stationary deterministic policies. 
\end{theorem}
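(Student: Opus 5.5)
The proof follows the standard model-based template. We decompose the value gap of $\pi_k$ into terms that involve only deviations of the empirical ERM operator at fixed functions. We bound these deviations with a Bernstein-type inequality applied to the exponentiated values, and this is where the single factor $e^{|\beta|/(1-\gamma)}$, rather than its square, must come from.

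\emph{Step 1 (decomposition).} Let $\widehat{\cT}^\pi$ denote the empirical operator and $\widehat Q^\pi$ its fixed point. Then
\[
\|V^*-V^{\pi_k}\|\le \|Q^*-\widehat Q^{\pi^*}\|+\|\widehat Q^{\pi^*}-\widehat Q^{\pi_k}\|+\|\widehat Q^{\pi_k}-Q^{\pi_k}\|.
\]
The middle term is at most $\varepsilon_{\rm opt}$, i.e. at most of order $\gamma^k/(1-\gamma)$, by the value-iteration guarantee of Lemma~1 applied to $\widehat M$; we choose $k$ large enough to make it negligible. For a fixed policy $\pi$, the $\gamma$-contraction gives
\[
\|Q^\pi-\widehat Q^\pi\|\le \tfrac{1}{1-\gamma}\|\cT^\pi Q^\pi-\widehat\cT^\pi Q^\pi\| = \tfrac{\gamma}{1-\gamma}\max_{s,a}|\rho_{s,a}(V^\pi)-\widehat\rho_{s,a}(V^\pi)|.
\]
So everything reduces to bounding $|\rho_{s,a}(v)-\widehat\rho_{s,a}(v)|$ for a fixed vector $v$ with values in $[0,1/(1-\gamma)]$.

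\emph{Step 2 (concentration).} Set $X=e^{-\beta v(s')}$, which takes values in an interval with endpoint ratio $e^{|\beta|/(1-\gamma)}$. Since $\log$ is Lipschitz on the range of $\E[X]$, we have $|\rho-\widehat\rho|\le \frac{1}{|\beta|}\frac{|\widehat\E X-\E X|}{\min(\E X,\widehat \E X)}$. The key refinement is to use Bernstein's inequality, working with the relative deviation $|\widehat\E X-\E X|/\E X$ rather than absolute Hoeffding. Normalize $Y=X/\E X$, with the appropriate orientation according to the sign of $\beta$. Then $\mathrm{Var}(Y)\le \E[Y^2]-1$ and $Y\le e^{|\beta|/(1-\gamma)}$, so
\[
\mathrm{Var}(Y)\le (e^{|\beta|/(1-\gamma)}-1)(1-e^{-|\beta|/(1-\gamma)})
\]
by the usual bound for a bounded variable with mean $1$ on $[m,M]$, namely $(M-1)(1-m)$. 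Bernstein then yields a relative error of order
\[
\sqrt{\frac{2\mathrm{Var}(Y)\log(2SA/\delta)}{N}}+\frac{2(e^{|\beta|/(1-\gamma)}-1)\log(2SA/\delta)}{3N},
\]
which matches exactly the three terms in the theorem's max. The condition $T\ge 8SA\log(\cdot)(e^{|\beta|/(1-\gamma)}-1)$ ensures the relative deviation is at most $1/2$, so $\widehat\E X\ge \E X/2$ and the $\log$-Lipschitz step costs only a factor $2$. A union bound over $(s,a)$ completes this step.

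\emph{Step 3 (value learning).} Because $Q^*$ is fixed and independent of the data, apply Step 2 with $v=V^*$. Then use
\[
\|Q^*-\widehat Q^*\|\le \tfrac{1}{1-\gamma}\|\cT Q^*-\widehat\cT Q^*\|
\]
together with $\|\widehat Q^*-Q_k\|$ small, and solve for $N$; this gives Theorem~\ref{thm:MBRSQVI_UB_Q}.

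\emph{Step 4 (policy learning, main obstacle).} The difficulty is the term for $\pi_k$: $V^{\pi_k}$ depends on the data, so Step 2 cannot be applied directly. I would handle it in two ways and take the minimum, matching the $\min$ in Theorem~\ref{thm:MBRSQVI_UB_policy}.

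\emph{Route (a).} Take a union bound over all $\pi\in\Pi$ for the fixed vectors $V^\pi$, which gives $\log(4SA|\Pi|/\delta)$. Combining the three terms, each at most $\varepsilon/3$, gives the constants $72$ and $8$.

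\emph{Route (b).} Avoid the union bound by comparing with the fixed vector $V^*$. Write $\widehat Q^{\pi_k}-Q^{\pi_k}$ in terms of the Step 3 error $\|Q^*-\widehat Q^*\|$ plus the gap $\|V^{\pi_k}-V^*\|$, and exploit monotonicity and the $1$-Lipschitz property of $\rho$ to get
\[
\|V^*-V^{\pi_k}\|\lesssim \tfrac{2\gamma}{1-\gamma}\|\widehat Q^*-Q^*\|+\varepsilon_{\rm opt}/(1-\gamma).
\]
This costs an extra factor $1/(1-\gamma)$ and requires value accuracy $\varepsilon(1-\gamma)/(2\gamma)$, which produces the $128\gamma^4/(1-\gamma)^4$ term.

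Since $\log|\Pi|=S\log A$, the two routes together give the $\min\{S,(1-\gamma)^{-2}\}$ factor.
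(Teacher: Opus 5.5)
Your architecture is the paper's: a fixed-policy bound from Bernstein plus Bhatia--Davis on the exponentiated values, and a relative-deviation-at-most-$1/2$ condition feeding Lemma~\ref{lem:log-inequality}. Then one branch takes a union bound over $\Pi$, and the other combines the value-learning guarantee with the greedy-policy deterioration lemma (Lemma~\ref{lem:DeteriorationOfGreedyPolicy}).

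The gap is in Step~2, where you assert that the constants come out exactly; with your variance bound they do not. Take $\beta>0$, $m:=e^{-|\beta|/(1-\gamma)}$ and $\mu:=\E X\in[m,1]$. Bhatia--Davis gives $\operatorname{Var}(Y)=\operatorname{Var}(X)/\mu^2\le(1/\mu-1)(1-m/\mu)$. You bound the two factors separately by $1/m-1$ and $1-m$, obtaining $(1-m)^2/m=e^{|\beta|/(1-\gamma)}+e^{-|\beta|/(1-\gamma)}-2$. But the first factor is worst at $\mu=m$ and the second at $\mu=1$. The paper maximizes jointly: $\max_{x\in[m,1]}(1-x)(x-m)/x^2=(1-m)^2/(4m)$, attained at $x=2m/(1+m)$, which is four times smaller.

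With your bound, the square-root term after the $\log$ step is $\sqrt{8(e^{|\beta|/(1-\gamma)}+e^{-|\beta|/(1-\gamma)}-2)\log(\cdot)/N}$ instead of $\sqrt{2(\cdots)\log(\cdot)/N}$. The $\varepsilon^{-2}$ coefficients therefore become $288$ and $512$ rather than the stated $72$ and $128$. Moreover, the burn-in $N\ge 8(e^{|\beta|/(1-\gamma)}-1)\log(\cdot)$ then only bounds the relative deviation by $\tfrac12\sqrt{1-m}+\tfrac1{12}$. This exceeds $1/2$ once $m<11/36$, so the hypothesis of Lemma~\ref{lem:log-inequality} is not established. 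As written, you get the right order but not the theorem with its constants. Replacing your variance bound by the joint maximization fixes all of this; the burn-in then gives relative deviation at most $1/3$.

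There is also a smaller slip in Step~1. The middle term $\|\widehat Q^{\pi^*}-\widehat Q^{\pi_k}\|$ is not what Lemma~\ref{lemma:QVI_epsilon_k} controls: it compares $\pi_k$ with $\pi^*$, not with an optimal policy of $\widehat M$. It also does not dominate $\widehat V^{\pi^*}(s)-\widehat V^{\pi_k}(s)$, since the two policies evaluate different actions at $s$. Use instead the one-sided bound $\widehat V^{\pi^*}(s)-\widehat V^{\pi_k}(s)\le\|\widehat V^*-\widehat V^{\pi_k}\|$, as in \eqref{equation:decomposition_policy}, which Lemma~\ref{lemma:QVI_epsilon_k}(ii) controls. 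This suffices because $V^*\ge V^{\pi_k}$ pointwise.
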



For sufficiently small $\varepsilon$, the terms of order $\varepsilon^{-2}$ dominate, and the bound simplifies to
\[
T = \widetilde O\!\left(
\frac{SA\gamma^2}{\varepsilon^2(1-\gamma)^2\beta^2}
\left(e^{|\beta|/(1-\gamma)}+e^{-|\beta|/(1-\gamma)}-2\right)
\min\left\{\log |\Pi|,\,\frac{\gamma^2}{(1-\gamma)^2}\right\}
\right).
\]
Furthermore, using the worst-case bound $|\Pi|\leq A^S$, this can be written as
\[
T = \widetilde O\!\left(
\frac{SA\gamma^2}{\varepsilon^2(1-\gamma)^2\beta^2}
\left(e^{|\beta|/(1-\gamma)}+e^{-|\beta|/(1-\gamma)}-2\right)
\min\left\{S,\,\frac{\gamma^2}{(1-\gamma)^2}\right\}
\right).
\]

\subsection{Comparison with Existing Bounds}
\label{sec:comparison_mortensen_talebi}

We now compare Theorems \ref{thm:MBRSQVI_UB_Q}--\ref{thm:MBRSQVI_UB_policy}
with the existing bounds of \citet{mortensen2026recursive}. We focus on the small-$\varepsilon$ regime, in which the terms of order $\varepsilon^{-2}$ dominate the sample complexity bounds.

\paragraph{Improvement over existing upper bounds.}
For value learning, \citet{mortensen2026recursive} establish an upper bound that scales as
\begin{align*}
\widetilde{\cal O}\!\left(
\frac{SA\gamma^2}{\varepsilon^2(1-\gamma)^2\beta^2}
\big(e^{|\beta|/(1-\gamma)}-1\big)^2
\right),
\end{align*}
whereas Theorem \ref{thm:MBRSQVI_UB_Q} yields
\begin{align*}
\widetilde{\cal O}\!\left(
\frac{SA\gamma^2}{\varepsilon^2(1-\gamma)^2\beta^2}\big(e^{|\beta|/(1-\gamma)}+e^{-|\beta|/(1-\gamma)}-2\big)
\right) = 
\widetilde{\cal O}\!\left(
\frac{SA\gamma^2}{\varepsilon^2(1-\gamma)^2\beta^2}e^{-|\beta|/(1-\gamma)}\big(e^{|\beta|/(1-\gamma)}-1\big)^2
\right).
\end{align*}
Therefore, the leading term is improved by a factor of $\cO(e^{|\beta|/(1-\gamma)})$. 
We note however that the numerical constant in the leading term in our bound is larger than that in \cite{mortensen2026recursive}. 

The same improvement holds for policy learning. Using the worst-case bound $|\Pi|\le A^S$ and suppressing logarithmic factors, in the small-$\varepsilon$ regime, the policy-learning upper bound of \citet{mortensen2026recursive} scales as
\begin{align*}
\widetilde{\cal O}\!\left(
\frac{SA\gamma^2}{\varepsilon^2(1-\gamma)^2\beta^2}
\big(e^{|\beta|/(1-\gamma)}-1\big)^2
\min\!\left\{S,\frac{\gamma^2}{(1-\gamma)^2}\right\}
\right),
\end{align*}
whereas Theorem \ref{thm:MBRSQVI_UB_policy} gives
\begin{align*}
\widetilde{\cal O}\!\left(
\frac{SA\gamma^2}{\varepsilon^2(1-\gamma)^2\beta^2}
\left(e^{|\beta|/(1-\gamma)}+e^{-|\beta|/(1-\gamma)}-2\right)
\min\!\left\{S,\frac{\gamma^2}{(1-\gamma)^2}\right\}
\right).
\end{align*}
Hence, our analysis improves the exponential dependence by a factor of $\cO(e^{|\beta|/(1-\gamma)})$. 


\paragraph{Comparison with \citet{essakine26tight}.}
More recently, \citet[Theorem~4]{essakine26tight} establish a sample-complexity upper bound for policy learning under ERM in finite-horizon MDPs under the forward interaction model. Here, \(H\) denotes the finite horizon, i.e., the number of decision stages in each episode. In the small-\(\varepsilon\) regime, their bound can be written as
\[
    \widetilde{\mathcal O}\!\left(
    \frac{SAH}{\varepsilon^2\beta^2}
    \big(e^{|\beta|H}-1\big)
    \right).
\]
Thus, under the correspondence $H\asymp (1-\gamma)^{-1}$, their result exhibits the same order of exponential dependence on the horizon and the risk parameter as our bound.

This correspondence, however, is only useful for comparing the scaling of the two results and does not allow their finite-horizon analysis to be transferred directly to our discounted infinite-horizon setting. A key ingredient in their analysis is that, after exponentiating the entropic value function, the finite-horizon Bellman recursion takes the form
\[
    e^{\beta V_h(s)}
    =
    e^{\beta r_h(s,a)}
    \mathbb{E}\!\left[
        e^{\beta V_{h+1}(S')}
        \,\middle|\, s,a
    \right],
\]
for a fixed action \(a\). Hence, the unknown transition kernel enters through a linear conditional expectation, which permits direct variance-sensitive control of the corresponding transition-estimation error. Their analysis further relies on stage-indexed data and confidence estimates collected along finite-horizon trajectories, together with a backward stopping argument initialized at the terminal stage and propagated from \(H\) to \(1\).

In the discounted recursive ERM setting, by contrast, exponentiation yields a Bellman recursion of the form
\[
    e^{\beta V(s)}
    =
    e^{\beta r(s,a)}
    \left(
        \mathbb{E}\!\left[
            e^{\beta V(S')}
            \,\middle|\,s,a
        \right]
    \right)^{\gamma},
\]
so the transition expectation is composed with the nonlinear map \(x\mapsto x^\gamma\). Moreover, our setting is stationary and uses a generative model, where samples can be queried independently from arbitrary state-action pairs and all samples contribute to estimating the same transition kernel; there is also no finite terminal stage from which a backward stopping recursion can be initialized. Consequently, both the confidence-bound construction and the backward stopping argument of \citet{essakine26tight} do not apply directly to the discounted infinite-horizon setting. Replacing \(H\) by the effective horizon \((1-\gamma)^{-1}\) therefore provides only an order-wise comparison between the two bounds and does not yield our guarantees from their analysis.

\paragraph{Comparison with the lower bounds and remaining gap.}
The lower bounds of \citet{mortensen2026recursive} for both value and policy learning scale, up to logarithmic factors and universal constants, as
\begin{align*}
\widetilde{\Omega}\!\left(
\frac{SA\gamma^2}{\varepsilon^2\beta^2}
e^{|\beta|/(1-\gamma)}
\right).
\end{align*}

Consequently, our value-learning upper bound matches the lower bound of \citet{mortensen2026recursive} in its exponential dependence on $|\beta|/(1-\gamma)$, as well as in $S$, $A$, $\varepsilon$, and $\beta$, up to logarithmic factors. The remaining gap for value learning is polynomial in the effective horizon and is of order
$(1-\gamma)^{-2}$. 
In contrast, the upper bound of \citet{mortensen2026recursive} has an upper-to-lower gap containing an additional factor of order $e^{|\beta|/(1-\gamma)}$. Thus, our analysis closes the exponential part of this gap.

For policy learning, our upper bound likewise matches the lower bound of \citet{mortensen2026recursive} in the exponential factor. The remaining gap, considering the worst-case $|\Pi|=A^S$, is 
\begin{align*}
\widetilde{\cal O}\!\left(
\frac{1}{(1-\gamma)^2}
\min\!\left\{S,\frac{\gamma^2}{(1-\gamma)^2}\right\}
\right).
\end{align*}
Hence, for both value and policy learning, our analysis removes the exponential gap between the upper bounds of \citet{mortensen2026recursive} and their lower bounds.


\begin{remark}[Risk-neutral consistency]
Taking the limit in the PAC bounds of Theorems~\ref{thm:MBRSQVI_UB_Q}--\ref{thm:MBRSQVI_UB_policy} as $\beta\to0$ yields the corresponding sample complexity bounds for the risk-neutral case. In the small-$\varepsilon$ regime, the resulting bound for value learning scales as
\[
\widetilde{\cal O}\!\left(
\frac{SA\gamma^2}{\varepsilon^2(1-\gamma)^4}
\right),
\]
while the policy-learning bound scales as
\[
\widetilde{\cal O}\!\left(
\frac{SA\gamma^2}{\varepsilon^2(1-\gamma)^4}
\min\left\{S,\frac{\gamma^2}{(1-\gamma)^2}\right\}
\right).
\]
Thus, compared to the minimax-optimal risk-neutral rate, the resulting value-learning bound is off by a factor of $\gamma^2(1-\gamma)^{-1}$, while the policy-learning bound is off by a factor of $\gamma^2(1-\gamma)^{-1}\min\{S,\gamma^2(1-\gamma)^{-2}\}$.
\end{remark}

\section{Proofs: Sample Complexity Upper Bounds}
\label{sec:proof_UBs}
In this section, we prove Theorems \ref{thm:MBRSQVI_UB_Q} and \ref{thm:MBRSQVI_UB_policy}. We begin with basic and standard decompositions of the error terms associated to $V^{\pi_k}$ and $Q_k$. 

Let $\widehat{V}^*$ and $\widehat{Q}^*$ denote the optimal value and Q-value in $\widehat{M}$, respectively.  
Then, 
for any $(s,a)\in \cS\times\cA$, 
\begin{align}
\label{equation:decomposition_Qvalue}
    Q_k(s,a) &\geq Q^*(s,a) - \| \widehat{Q}^{\pi^*}-Q^* \| - \|Q_k-\widehat{Q}^*\|\, ,\\
\label{equation:decomposition_policy}
     V^{\pi_k}(s)  &\geq V^*(s) - \|V^{\pi_k} - \widehat{V}^{\pi_k} \| - \|\widehat{V}^{\pi^*}-V^* \|      -\|\widehat{V}^{\pi_k} - \widehat{V}^* \|\,,
\end{align}
where we recall that $\pi^*$ is an optimal policy in $M$. We also recall the following result from \citep[Lemma~2]{mortensen2026recursive}:
\begin{lemma}
\label{lemma:QVI_epsilon_k}
    Fix $\varepsilon>0$. Under MB-\RSQVI\ one has: (i) $\|Q_k-Q^*\|<\varepsilon$ if $k> \frac{-\log((1-\gamma)\varepsilon)}{\log(1/\gamma)}$; and (ii) $\|V^{\pi_k}-V^*\|<\varepsilon$ if $k> \frac{\log 2 - \log((1-\gamma)^2\varepsilon)}{\log(1/\gamma)}$.
\end{lemma}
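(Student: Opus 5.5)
The plan is to treat both parts as pure value-iteration statements about a fixed MDP. Algorithm~\ref{alg:MB-RS-QVI} iterates the empirical Bellman operator $\widehat{\cT}$ of $\widehat M$, so the quantities $Q^*,V^*,V^{\pi_k}$ in the lemma should be read as those of the MDP whose operator is iterated, namely $\widehat Q^*,\widehat V^*,\widehat V^{\pi_k}$ in $\widehat M$. This is also how the lemma is used in \eqref{equation:decomposition_Qvalue}--\eqref{equation:decomposition_policy}. Since $\widehat P_{s,a}$ is a probability vector, $\widehat{\cT}$ and $\widehat{\cT}^{\pi}$ are $\gamma$-contractions in $\|\cdot\|$ by \citep[Lemma~1]{mortensen2026recursive}, and $\widehat\rho_{s,a}$ enjoys \eqref{monotonicity_Phi}--\eqref{consistency_Phi}. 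I write the argument for a generic MDP and drop the hats.

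\textbf{Part (i).} First, $0\le Q^*\le \frac{1}{1-\gamma}$. This follows from $R\in[0,1]$ and from monotonicity plus constancy of $\rho$: if $0\le f\le c$, then $0\le \cT f\le 1+\gamma c$, and iterating from $f=0$ gives these bounds for the fixed point. Since $Q_0=0$ and $Q_{j+1}=\cT Q_j$, the contraction property gives
\[
\|Q_k-Q^*\|\le \gamma^k\|Q_0-Q^*\|\le \frac{\gamma^k}{1-\gamma}.
\]
The right-hand side is $<\varepsilon$ exactly when $k\log(1/\gamma) > -\log((1-\gamma)\varepsilon)$, which is the stated condition.

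\textbf{Part (ii).} I will run the standard greedy-policy argument using only the contraction and the identities $\cT Q_k=\cT^{\pi_k}Q_k$ (valid because $\pi_k$ is greedy for $Q_k$) and $V^{\pi}(s)=Q^{\pi}(s,\pi(s))$.
\begin{enumerate}
\item Decompose
\[
Q^*-Q^{\pi_k}=(\cT Q^*-\cT Q_k)+(\cT^{\pi_k}Q_k-\cT^{\pi_k}Q^{\pi_k}).
\]
This yields $\|Q^*-Q^{\pi_k}\|\le \gamma\|Q^*-Q_k\|+\gamma\big(\|Q_k-Q^*\|+\|Q^*-Q^{\pi_k}\|\big)$, hence
\[
\|Q^*-Q^{\pi_k}\|\le \frac{2\gamma}{1-\gamma}\|Q_k-Q^*\|.
\]
\item For each $s$, write
\[
V^*(s)-V^{\pi_k}(s)=\big[Q^*(s,\pi^*(s))-Q^*(s,\pi_k(s))\big]+\big[Q^*(s,\pi_k(s))-Q^{\pi_k}(s,\pi_k(s))\big].
\]
Since $Q_k(s,\pi_k(s))\ge Q_k(s,\pi^*(s))$, the first bracket is at most $2\|Q_k-Q^*\|$.
\item Adding the two brackets gives
\[
\|V^*-V^{\pi_k}\|\le \frac{2}{1-\gamma}\|Q_k-Q^*\|\le \frac{2\gamma^k}{(1-\gamma)^2},
\]
where the last step uses part (i).
\end{enumerate}
Requiring $2\gamma^k/(1-\gamma)^2<\varepsilon$ is exactly $k>\frac{\log 2-\log((1-\gamma)^2\varepsilon)}{\log(1/\gamma)}$.

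There is no real obstacle here. The only points needing care are the a priori bound $\|Q^*\|\le 1/(1-\gamma)$, which relies on the properties \eqref{monotonicity_Phi}--\eqref{consistency_Phi} of ERM rather than on linearity of expectation, and the fact that the nonlinear operators $\cT^{\pi}$ are still $\gamma$-contractions; both are supplied by earlier results.
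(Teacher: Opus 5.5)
Your proof is correct. That includes reading $Q^*,V^*,V^{\pi_k}$ as the corresponding quantities $\widehat Q^*,\widehat V^*,\widehat V^{\pi_k}$ in $\widehat M$. Taken literally the lemma would be false, since $Q_k\to\widehat Q^*$, and this empirical-MDP reading is exactly how the lemma is used in the proofs of Theorems~\ref{thm:MBRSQVI_UB_Q}--\ref{thm:MBRSQVI_UB_policy}.

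The paper gives no proof of its own; it imports the statement from \citep[Lemma~2]{mortensen2026recursive}. Part (i) is the one-line contraction argument you give.

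For part (ii), the shortest route inside the paper is Lemma~\ref{lem:DeteriorationOfGreedyPolicy}. Since $Q_k(s,a)=R(s,a)+\gamma\widehat\rho_{s,a}(\overline V)$ with $\overline V(s)=\max_a Q_{k-1}(s,a)$, the policy $\pi_k$ is exactly the greedy policy with respect to $\overline V$. Moreover $\|\overline V-\widehat V^*\|\le\|Q_{k-1}-\widehat Q^*\|\le\gamma^{k-1}/(1-\gamma)$. That lemma then gives $\frac{2\gamma}{1-\gamma}\cdot\frac{\gamma^{k-1}}{1-\gamma}=\frac{2\gamma^k}{(1-\gamma)^2}$, the same constant you obtain.

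Your steps 1--3 in effect re-prove a $Q$-function version of that greedy-policy lemma. You use only the $\gamma$-contraction of $\cT$ and $\cT^{\pi}$ and the identity $\cT Q_k=\cT^{\pi_k}Q_k$. This makes your argument self-contained and expresses the bound in terms of the current iterate's error $\|Q_k-\widehat Q^*\|$, at the cost of a few extra lines.

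One small point to make explicit in step 3: the first bracket is also nonnegative, because $\pi^*(s)$ maximizes $Q^*(s,\cdot)$. Together with your upper bound, this is what lets you bound the absolute value $|V^*(s)-V^{\pi_k}(s)|$ rather than only the signed difference.
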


\subsection{Proof of Theorem \ref{thm:MBRSQVI_UB_Q}}

\begin{proof}
Let $\varepsilon>0$ and $\beta\neq 0$. In view of the error decomposition in (\ref{equation:decomposition_Qvalue}), to establish $\varepsilon$-value-correctness it suffices to ensure $\| \widehat{Q}^{\pi^*}-Q^* \|\le \varepsilon/2$ and $\|Q_k-\widehat{Q}^*\|\le \varepsilon/2$. 
By Lemma \ref{lemma:QVI_epsilon_k}, we can have $\|Q_k-\widehat Q^*\|<\varepsilon/2$ using enough iterations of the optimization oracle. To ensure $\| \widehat{Q}^{\pi^*}-Q^* \|\le \varepsilon/2$, we use the following argument.

For a stationary deterministic policy \(\pi\), define 
\[
\widehat Q^\pi(s,a)
=
R(s,a)
-
\frac{\gamma}{\beta}
\log
\left(
\sum_{s'} \widehat P_{s,a}(s')e^{-\beta \widehat V^\pi(s')}
\right),
\]
and let $\widehat {\cT}^\pi$ denotes its corresponding Bellman operator, so that $\widehat Q^\pi = \widehat {\cT}^\pi \widehat Q^\pi$. 
Since \(\widehat {\cT}^\pi\) is a \(\gamma\)-contraction w.r.t.~$\|\cdot\|$ (see \citep[Lemma~?]{mortensen2026recursive}), we have
\[
\begin{aligned}
\|Q^\pi-\widehat Q^\pi\|
&= \|\cT^\pi Q^\pi - \widehat {\cT}^\pi \widehat Q^\pi\| \\
&\le \|\cT^\pi Q^\pi - \widehat {\cT}^\pi Q^\pi\| + \|\ \widehat {\cT}^\pi Q^\pi - \widehat {\cT}^\pi \widehat Q^\pi\| \\
&\le \|\cT^\pi Q^\pi - \widehat {\cT}^\pi Q^\pi\| +
\gamma \|Q^\pi-\widehat Q^\pi\|.
\end{aligned}
\]
Hence,  
\begin{align}
\|Q^\pi-\widehat Q^\pi\| &\le \frac{1}{1-\gamma} \|\cT^\pi Q^\pi - \widehat {\cT}^\pi Q^\pi\|\nonumber\\
&=   \frac{\gamma}{(1-\gamma)|\beta|}
    \max_{s,a} 
    \left|  \log   \frac{\sum_{s'}P_{s,a}(s')e^{-\beta V^\pi(s')}}{\sum_{s'}\widehat P_{s,a}(s')e^{-\beta V^\pi(s')}}    \right|.
    \label{eq:BelEr}
\end{align}

We consider two cases depending on the sign of $\beta$. 

\paragraph{Case 1: $\beta>0$.} 
For each \((s,a)\), define
\[
h_{s,a}
:=
\sum_{s'}P_{s,a}(s')e^{-\beta V^\pi(s')},
\qquad
\widehat h_{s,a}
:=
\sum_{s'}\widehat P_{s,a}(s')e^{-\beta V^\pi(s')}.
\]
Since \(0\le V^\pi(s')\le \frac{1}{1-\gamma}\) for all $s'\in \cS$,  $e^{-|\beta|/(1-\gamma)} \le e^{-\beta V^\pi(s')}\le 1$, so that   
\[
h_{s,a} \ge e^{-|\beta|/(1-\gamma)}.
\]

Let \(s'_1,\ldots,s'_N\) be the \(N\) samples collected on $(s,a)$ from the simulator. These are drawn i.i.d.~from
\(P(\cdot \mid s,a)\). Then, 
\[
\widehat h_{s,a}
= \frac{1}{N}\sum_{i=1}^N e^{-\beta V^\pi(s'_i)}. 
\]
The summands are i.i.d., bounded in \([e^{-|\beta|/(1-\gamma)},1]\), and have mean
\(h_{s,a}\). Applying Bernstein's inequality \ref{lem:Bernstein}, we have that with probability at least
\(1-\frac{\delta}{SA}\),
\begin{align*}
\left|
\widehat h_{s,a} - h_{s,a}
\right|
&\le
\sqrt{
\frac{
2\operatorname{Var}_{P_{s,a}}
\left(e^{-\beta V^\pi(s')}\right)
\log(2SA/\delta)
}{N}
}
+ \frac{2\left(1-e^{-|\beta|/(1-\gamma)}\right)\log(2SA/\delta)}{3N}
\end{align*}
To control $\operatorname{Var}_{P_{s,a}}$, we use the Bhatia--Davis inequality (Lemma \ref{lem:Bhatia-Davis}): 
\[
\operatorname{Var}_{P_{s,a}}\left(e^{-\beta V^\pi(s')}\right)
\leq
(1-h_{s,a})\left(h_{s,a}-e^{-|\beta|/(1-\gamma)}\right).
\]
We note that a closely related variance bound for exponential transforms is given in \citet[Lemma~24]{essakine26tight}, where the Bhatia--Davis inequality is used. We apply the argument here to the discounted setting. Hence, with probability at least $1-\frac{\delta}{SA}$,
\begin{align*}
\left|\widehat h_{s,a}-h_{s,a}\right|
&\leq
\sqrt{\frac{2(1-h_{s,a})
\left(h_{s,a}-e^{-|\beta|/(1-\gamma)}\right)
\log(2SA/\delta)}{N}}
+
\frac{2\left(1-e^{-|\beta|/(1-\gamma)}\right)
\log(2SA/\delta)}{3N}.
\end{align*}

We have
\[
\max_{x\in[e^{-|\beta|/(1-\gamma)},1]}
\frac{(1-x)\left(x-e^{-|\beta|/(1-\gamma)}\right)}{x^2}
=
\frac{1}{4}\left(
e^{|\beta|/(1-\gamma)}+e^{-|\beta|/(1-\gamma)}-2
\right).
\]
Indeed, letting $m=e^{-|\beta|/(1-\gamma)}$ and
\[
g(x)=\frac{(1-x)(x-m)}{x^2},
\]
we have
\[
g'(x)=\frac{2m-(1+m)x}{x^3}.
\]
Thus, $g$ is maximized at $x=2m/(1+m)$, and the substitution gives
\[
g\!\left(\frac{2m}{1+m}\right)
=
\frac{(1-m)^2}{4m}
=
\frac{1}{4}\left(
e^{|\beta|/(1-\gamma)}+e^{-|\beta|/(1-\gamma)}-2
\right).
\]
This is the same bound appearing in \citet[Lemma~24]{essakine26tight}.
Moreover,
\[
\frac{1-e^{-|\beta|/(1-\gamma)}}{h_{s,a}}
\leq e^{|\beta|/(1-\gamma)}-1.
\]
Hence, if
\[
N\geq \log\!\left(\frac{2SA}{\delta}\right)
\max\left\{
8\left(e^{|\beta|/(1-\gamma)}+e^{-|\beta|/(1-\gamma)}-2\right),\,
\frac{8}{3}\left(e^{|\beta|/(1-\gamma)}-1\right)
\right\},
\]
then $|\widehat h_{s,a}-h_{s,a}|\leq h_{s,a}/2$. 
Since
\[
e^{|\beta|/(1-\gamma)}+e^{-|\beta|/(1-\gamma)}-2
\leq e^{|\beta|/(1-\gamma)}-1,
\]
the condition
\[
N\geq
8\left(e^{|\beta|/(1-\gamma)}-1\right)
\log\!\left(\frac{2SA}{\delta}\right)
\]
ensures that $|\widehat h_{s,a}-h_{s,a}|\leq h_{s,a}/2$.

Now, applying Lemma \ref{lem:log-inequality} gives
\begin{align*}
\big|\log \widehat h_{s,a}-\log h_{s,a}\big|
&\leq
\sqrt{\frac{8(1-h_{s,a})\left(h_{s,a}-e^{-|\beta|/(1-\gamma)}\right)\log(2SA/\delta)}{Nh_{s,a}^2}}
+
\frac{4\left(1-e^{-|\beta|/(1-\gamma)}\right)\log(2SA/\delta)}{3Nh_{s,a}}\\
&\leq
\sqrt{\frac{2\left(e^{|\beta|/(1-\gamma)}+e^{-|\beta|/(1-\gamma)}-2\right)\log(2SA/\delta)}{N}}
+
\frac{4\left(e^{|\beta|/(1-\gamma)}-1\right)\log(2SA/\delta)}{3N}.
\end{align*} Putting together with equation~\eqref{eq:BelEr} and taking a union bound over all \((s,a)\in\mathcal S\times\mathcal A\),
we have that with probability at least \(1-\delta\),
\begin{align*}
\|Q^\pi-\widehat Q^\pi\|
&\leq
\frac{\gamma}{(1-\gamma)|\beta|}
\Bigg[
\sqrt{\frac{2\left(
 e^{|\beta|/(1-\gamma)}+e^{-|\beta|/(1-\gamma)}-2
\right)\log(2SA/\delta)}{N}}
+
\frac{4\left(e^{|\beta|/(1-\gamma)}-1\right)
\log(2SA/\delta)}{3N}
\Bigg].
\end{align*}

Applying the bound above for $\pi = \pi^\star$, we have that the condition $\|Q^*-\widehat Q^*\|\leq \varepsilon/2$ can be ensured by taking
\begin{align*}
N\geq \log\!\left(\frac{2SA}{\delta}\right)
\max\Bigg\{&
8\left(e^{|\beta|/(1-\gamma)}-1\right),
\frac{32\gamma^2
\left(e^{|\beta|/(1-\gamma)}+e^{-|\beta|/(1-\gamma)}-2\right)}
{\varepsilon^2(1-\gamma)^2\beta^2},
\frac{16\gamma\left(e^{|\beta|/(1-\gamma)}-1\right)}
{3\varepsilon(1-\gamma)|\beta|}
\Bigg\},
\end{align*}
Combining this with $\|Q_k-\widehat Q^*\|\leq \varepsilon/2$ and noting that $T=SAN$ proves Theorem \ref{thm:MBRSQVI_UB_Q}.

\paragraph{Case 2: $\beta<0$.}
For each \((s,a)\), define
\[
h_{s,a}
:=
\sum_{s'}P_{s,a}(s')e^{-|\beta|\left(\frac{1}{1-\gamma}-V^\pi(s')\right)},
\qquad
\widehat h_{s,a}
:=
\sum_{s'}\widehat P_{s,a}(s')e^{-|\beta|\left(\frac{1}{1-\gamma}-V^\pi(s')\right)}.
\]
Compared to the case of $\beta>0$, the exponents have some shift. However, this does not change the logarithmic ratio because it multiplies both \(h_{s,a}\) and
\(\widehat h_{s,a}\) by the same constant \(e^{-|\beta|/(1-\gamma)}\). 

Note that 
\[
e^{-|\beta|/(1-\gamma)}
\le
e^{-|\beta|\left(\frac{1}{1-\gamma}-V^\pi(s')\right)}
\le
1.
\]
Hence, $e^{-|\beta|/(1-\gamma)} \le h_{s,a} \le 1$. 
Since the shifted exponential term lies in the same interval as in Case 1, the same application of Lemma \ref{lem:Bhatia-Davis} applies:
\[
\operatorname{Var}_{P_{s,a}}\!\left(
 e^{-|\beta|\left(\frac{1}{1-\gamma}-V^\pi(s')\right)}
\right)
\leq
(1-h_{s,a})\left(h_{s,a}-e^{-|\beta|/(1-\gamma)}\right).
\]
Hence, the variance term admits exactly the same control as in Case 1. Together with the fact that the shift does not change the logarithmic ratio, the remainder of the argument is identical, and the same sample-complexity bound follows for $\beta<0$.
\end{proof}

\subsubsection{Proof of Theorem \ref{thm:MBRSQVI_UB_policy}}

\begin{proof}
Let $\varepsilon>0$ and $\beta\neq 0$.
In view of the error decomposition in (\ref{equation:decomposition_policy}), to establish $\varepsilon$-policy-correctness it suffices to require: (i) $\|\widehat{V}^{\pi_k}-\widehat{V}^*\|\leq \varepsilon/3$, (ii) $\|\widehat{V}^{\pi^*}-V^*\|\leq \varepsilon/3$, and (iii) $\|V^{\pi_k}-\widehat{V}^{\pi_k}\|\leq \varepsilon/3$.
For (i), we can have $\|\widehat{V}^{\pi_k}-\widehat{V}^*\|\leq \varepsilon/3$ using enough iterations of the optimization oracle as a consequence of Lemma \ref{lemma:QVI_epsilon_k}. To control (ii) and (iii), let us define the event
$$
E:=\cap_{\pi\in \Pi}\big\{\|V^{\pi} - \widehat{V}^\pi\| \le \varepsilon/3\}.
$$
We can show that $\P(E)\ge 1-\delta$ for sufficiently large $N$. Indeed, for a given $\pi\in \Pi$, the fixed-policy bound established in the proof of Theorem \ref{thm:MBRSQVI_UB_Q}, with $\varepsilon$ replaced by $\varepsilon/3$ and $\delta$ replaced by $\delta/|\Pi|$, gives
$
\P\!\left(\|V^\pi-\widehat V^\pi\|>\varepsilon/3\right)\leq \delta/|\Pi|
$
whenever
\begin{align*}
N\geq \log\!\bigg(\frac{2SA|\Pi|}{\delta}\bigg)
\max\Bigg\{&
8\left(e^{|\beta|/(1-\gamma)}-1\right),
\frac{72\gamma^2
\left(e^{|\beta|/(1-\gamma)}+e^{-|\beta|/(1-\gamma)}-2\right)}
{\varepsilon^2(1-\gamma)^2\beta^2},
\frac{8\gamma\left(e^{|\beta|/(1-\gamma)}-1\right)}
{\varepsilon(1-\gamma)|\beta|}
\Bigg\}.
\end{align*}
Hence,
\begin{align*}
\P(E^c)
&=\P\!\left(\exists\pi\in\Pi:\|V^\pi-\widehat V^\pi\|>\varepsilon/3\right)
\leq \sum_{\pi\in\Pi}\P\!\left(\|V^\pi-\widehat V^\pi\|>\varepsilon/3\right)
\leq \delta,
\end{align*}
so that $\P(E)\geq 1-\delta$.
It is evident that conditioned on $E$, (ii) and (iii) hold. Hence, with probability greater than $1-\delta$, $\varepsilon$-policy-correctness is maintained using a total number of model calls of
\begin{align}
\label{eq:T_1}
T_{1,\delta}
=SA\log\!\bigg(\frac{2SA|\Pi|}{\delta}\bigg)
\max\Bigg\{&
8\left(e^{|\beta|/(1-\gamma)}-1\right),\nonumber
\frac{72\gamma^2
\left(e^{|\beta|/(1-\gamma)}+e^{-|\beta|/(1-\gamma)}-2\right)}
{\varepsilon^2(1-\gamma)^2\beta^2},\nonumber\\
&\frac{8\gamma\left(e^{|\beta|/(1-\gamma)}-1\right)}
{\varepsilon(1-\gamma)|\beta|}
\Bigg\}.
\end{align}

To establish the second bound, first observe that Theorem \ref{thm:MBRSQVI_UB_Q} implies that $\|\widehat{V}^{\pi_k} - V^*\|\leq \varepsilon$ with probability exceeding $1-\delta$ if
\begin{align*}
T\geq SA\log\!\bigg(\frac{2SA}{\delta}\bigg)
\max\Bigg\{&
8\left(e^{|\beta|/(1-\gamma)}-1\right),
\frac{32\gamma^2
\left(e^{|\beta|/(1-\gamma)}+e^{-|\beta|/(1-\gamma)}-2\right)}
{\varepsilon^2(1-\gamma)^2\beta^2},
\frac{16\gamma\left(e^{|\beta|/(1-\gamma)}-1\right)}
{3\varepsilon(1-\gamma)|\beta|}
\Bigg\};
\end{align*}
this is verified by noting that
$$
\|\widehat{V}^{\pi_k} - V^*\|
= \max_s \big|\max_a \widehat Q^{\pi_k}(s,a) - \max_a Q^{*}(s,a)\big|
\leq \max_{s,a} \big|\widehat Q^{\pi_k}(s,a) - Q^{*}(s,a)\big|
=\|Q_k - Q^*\| .
$$

To proceed, we make use of the following lemma:

\begin{lemma}[{\cite[Lemma~5]{mortensen2026recursive}}]
\label{lem:DeteriorationOfGreedyPolicy}
    Let $\alpha>0$. Let $\overline{V}\in \R^S$ be a value function obeying $\|V^*-\overline{V}\|<\alpha$, and $\pi^G:=\argmax_a[R(s,a)+\gamma \rho_{s,a}(\overline{V}(s'))]$ be the greedy policy with respect to $\overline{V}$. Then,
    $\|V^*-V^{\pi_G}\|\leq \frac{2\gamma}{1-\gamma}\alpha$\,.
\end{lemma}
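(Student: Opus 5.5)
The argument follows the classical risk-neutral proof of the greedy-policy deterioration bound. It needs only three properties of $\rho_{s,a}$: monotonicity~\eqref{monotonicity_Phi}, translation invariance ($\rho_{s,a}(v+c\mathbf 1)=\rho_{s,a}(v)+c$ for constants $c$), and the resulting $1$-Lipschitz property $|\rho_{s,a}(v)-\rho_{s,a}(w)|\le\|v-w\|$. The last follows from the first two, since $w-\|v-w\|\le v\le w+\|v-w\|$.

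Define the state-value Bellman operators $(T^\pi v)(s)=R(s,\pi(s))+\gamma\rho_{s,\pi(s)}(v)$ and $(Tv)(s)=\max_a[R(s,a)+\gamma\rho_{s,a}(v)]$. By the Lipschitz property both are $\gamma$-contractions in $\|\cdot\|$, and $V^{\pi}=T^{\pi}V^{\pi}$, $V^*=TV^*$. By definition of $\pi^G$ we also have $T^{\pi^G}\overline V=T\overline V$.

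The key steps, in order, are as follows. First, for every $s$, bound
\[
V^*(s)-V^{\pi^G}(s)=(TV^*)(s)-(T^{\pi^G}V^{\pi^G})(s)
\]
by inserting $T\overline V=T^{\pi^G}\overline V$:
\[
(TV^*-T\overline V)(s)+(T^{\pi^G}\overline V-T^{\pi^G}V^*)(s)+(T^{\pi^G}V^*-T^{\pi^G}V^{\pi^G})(s).
\]
Second, bound each piece. The first two are each at most $\gamma\|V^*-\overline V\|<\gamma\alpha$ by contraction. The third is at most $\gamma\|V^*-V^{\pi^G}\|$. Third, since $V^*\ge V^{\pi^G}$ pointwise (optimality), the left-hand side is nonnegative, so taking the maximum over $s$ gives
\[
\|V^*-V^{\pi^G}\|\le 2\gamma\alpha+\gamma\|V^*-V^{\pi^G}\|.
\]
Rearranging yields $\|V^*-V^{\pi^G}\|\le\frac{2\gamma}{1-\gamma}\alpha$.

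The only delicate point is the Lipschitz and contraction property of the nonlinear $\rho_{s,a}$, which replaces linearity of expectation in the classical proof. This is exactly the content of \citep[Lemma~1]{mortensen2026recursive}; I would verify it directly as above, using that $\rho_{s,a}(v+c)=-\frac1\beta\log(e^{-\beta c}\E[e^{-\beta v}])=\rho_{s,a}(v)+c$, which holds for either sign of $\beta$. I also need $V^*\ge V^{\pi^G}$ pointwise, which is immediate from $V^*=\sup_\pi V^\pi$.
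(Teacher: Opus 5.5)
Your proof is correct and complete. The three-term decomposition uses $T\overline V=T^{\pi^G}\overline V$ and the $\gamma$-contraction of $T$ and $T^{\pi^G}$, which follows from monotonicity plus translation invariance of $\rho_{s,a}$ for either sign of $\beta$. The rearrangement is legitimate because all values lie in $[0,\frac{1}{1-\gamma}]$, and the bound $\frac{2\gamma}{1-\gamma}\alpha$ follows.

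The paper does not prove this lemma itself; it imports it from Lemma~5 of the cited prior work, so there is no in-paper argument to compare against. Yours is the standard Singh--Yee-type greedy-policy argument, with the $1$-Lipschitz property of $\rho_{s,a}$ standing in for linearity of expectation.

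One small simplification: you do not need $V^*\ge V^{\pi^G}$. Each of your three terms is bounded in absolute value, so the same inequality holds for $|V^*(s)-V^{\pi^G}(s)|$ directly.
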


By construction, $\pi_k$ is the greedy policy with respect to $\widehat{V}^{\pi_k}$. Now applying Lemma \ref{lem:DeteriorationOfGreedyPolicy} with $\overline V = \widehat{V}^{\pi_k}$, we have that
$$
\|V^*-V^{\pi_k}\|\leq \frac{2\gamma}{1-\gamma}\varepsilon,
$$
which yields the following bound that holds with probability larger than $1-\delta$:
\begin{align}
\label{eq:T_2}
T_{2,\delta}
=SA\log\!\bigg(\frac{2SA}{\delta}\bigg)
\max\Bigg\{&
8\left(e^{|\beta|/(1-\gamma)}-1\right),\nonumber
\frac{128\gamma^4
\left(e^{|\beta|/(1-\gamma)}+e^{-|\beta|/(1-\gamma)}-2\right)}
{\varepsilon^2(1-\gamma)^4\beta^2},\nonumber\\
&\frac{32\gamma^2\left(e^{|\beta|/(1-\gamma)}-1\right)}
{3\varepsilon(1-\gamma)^2|\beta|}
\Bigg\}.
\end{align}

\paragraph{Final Bound.}To derive the final bound, we put together (\ref{eq:T_1}) and (\ref{eq:T_2}), while suitably adjusting the error probabilities. Therefore,  
$\varepsilon$-policy-correctness is guaranteed with probability exceeding $1-\delta$ if $T\ge \min\{T_{1,\delta/2},T_{2,\delta/2}\}$.
\end{proof}

\section{Conclusions}\label{Section:Conclusions}

We studied sample complexities of value learning and policy learning in finite discounted MDPs under recursive ERM, assuming that a generative model of the environment is available. 
We derived refined PAC-type sample complexity bounds for MB-\RSQVI\ \citep{mortensen2026recursive} that exhibit the same exponential dependence as in the corresponding minimax lower bounds for this setting. 
An interesting direction for future work is to close between the gap in the lower and upper bounds for this setting. Another promising direction is the development of model-free algorithms for this setting, and to study MDPs with function approximation (e.g., \cite{zhou2021provably}). 

\subsubsection*{Acknowledgments}
The authors would like to acknowledge the support from Independent Research Fund Denmark, grant number 1026-00397B.


\bibliography{references}
\bibliographystyle{plainnat}

\section{Supporting Lemmas}

\begin{lemma}
\label{lem:log-inequality}
Consider $x,y>0$. If
$|y-x|\le \frac{1}{2} x$, then
\[
|\log y-\log x|
\le
\frac{2|y-x|}{x}.
\]
\end{lemma}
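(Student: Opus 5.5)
Write $y = x(1+t)$ with $t := (y-x)/x$. The hypothesis $|y-x|\le x/2$ then reads $|t|\le 1/2$, and the claim becomes the scalar inequality
\[
|\log(1+t)| \le 2|t| \qquad \text{for all } t\in[-\tfrac12,\tfrac12].
\]
Once this is shown, substituting $t=(y-x)/x$ gives exactly $|\log y-\log x| = |\log(1+t)| \le 2|y-x|/x$.

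To prove the scalar inequality I will use the mean value theorem for $\log$ on the interval between $1$ and $1+t$. This gives $\log(1+t) = t/\xi$ for some $\xi$ lying between $1$ and $1+t$. Since $|t|\le 1/2$, every such $\xi$ satisfies $\xi\ge 1/2$, so $|\log(1+t)| \le |t|/\xi \le 2|t|$. An equivalent route is the integral form $\log y-\log x=\int_x^y du/u$, where $u\ge \min\{x,y\}\ge x/2$ on the range of integration.

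One could instead use the elementary bounds $\log(1+t)\le t$ for $t\ge0$ and $\log(1+t)\ge t/(1+t)\ge 2t$ for $t\in[-1/2,0]$, which give the same conclusion by treating the two signs separately. There is no real obstacle here. The only care needed is to observe that the hypothesis $|y-x|\le x/2$ forces $y\ge x/2>0$, which keeps the intermediate point bounded away from zero; this is precisely where the constant $2$ comes from.
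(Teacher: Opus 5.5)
Your proof is correct and takes the same route as the paper: the mean value theorem for $\log$, with the intermediate point bounded below by $x/2$ (your rescaling $t=(y-x)/x$ with $\xi\ge 1/2$ is just the normalized form of this). Your version is slightly cleaner, since it handles both signs of $y-x$ at once, whereas the paper's proof treats $y<x$ and $y>x$ as separate cases.
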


\begin{proof}
First, we consider $x>y$.  By the mean value theorem applied to \(x\mapsto \log x\), there exists some \(\xi\in[y,x]\) such that
\[
\log y-\log x=
\frac{1}{\xi}(x- y).
\]
By the assumption, $y\ge\frac12 x$, so that any \(\xi\in[y,x]\) satisfies
$\xi\ge \frac{1}{2}x$, 
thus giving 
\[
|\log y-\log x|
=
\frac{|y-x|}{\xi}
\le
\frac{2|y-x|}{x}.
\]
If $x<y$, then $\xi\ge$ will hold trivially, and the lemma follows. 
\end{proof}

\begin{lemma}[{Bernstein's Inequality}] \label{lem:Bernstein}
Let $(X_1,\ldots,X_n)$ be independent, zero-mean random variables such that
$|X_i|\le b$ almost surely, and define $V:=\sum_{i=1}^n \mathbb{E}[X_i^2]$.
Then, for any $\delta\in(0,1)$, with probability at least \(1-\delta\),
$$
\sum_{i=1}^n X_i
\le
\sqrt{2V\log(1/\delta)} + \frac{b}{3}\log(1/\delta).
$$
\end{lemma}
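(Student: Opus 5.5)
This is the one-sided Bernstein inequality stated in Lemma \ref{lem:Bernstein}. I would prove it by the standard Cramér--Chernoff method.

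\emph{Step 1 (exponential moment bound).} Fix $\lambda\in(0,3/b)$ and let $\sigma_i^2:=\E[X_i^2]$. Expand the moment generating function:
\[
\E[e^{\lambda X_i}] = 1 + \sum_{k\ge 2}\frac{\lambda^k\E[X_i^k]}{k!}.
\]
The linear term vanishes since $\E[X_i]=0$. Using $|X_i|\le b$, we have $|\E[X_i^k]|\le \sigma_i^2 b^{k-2}$. Together with $k!\ge 2\cdot 3^{k-2}$, this gives
\[
\E[e^{\lambda X_i}]\le 1+\frac{\lambda^2\sigma_i^2}{2}\sum_{j\ge0}\Big(\frac{\lambda b}{3}\Big)^j = 1+\frac{\lambda^2\sigma_i^2}{2(1-\lambda b/3)} \le \exp\Big(\frac{\lambda^2\sigma_i^2}{2(1-\lambda b/3)}\Big).
\]
By independence, $\E[e^{\lambda\sum_i X_i}]\le \exp\big(\lambda^2 V/(2(1-\lambda b/3))\big)$.

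\emph{Step 2 (Chernoff bound).} Markov's inequality gives, for every $t>0$,
\[
\P\Big(\sum_i X_i\ge t\Big)\le \exp\Big(-\lambda t+\frac{\lambda^2V}{2(1-\lambda b/3)}\Big).
\]

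\emph{Step 3 (inversion).} This is the step needing the most care. Rather than optimizing over $\lambda$ and solving a quadratic in $t$, I would work directly with the threshold. Set $L:=\log(1/\delta)$ and $t:=\sqrt{2VL}+bL/3$. It then suffices to exhibit some $\lambda\in(0,3/b)$ for which
\[
\lambda t-\frac{\lambda^2V}{2(1-\lambda b/3)}\ge L.
\]
Take
\[
\lambda=\frac{\sqrt{2L/V}}{1+\frac{b}{3}\sqrt{2L/V}},
\]
so that $1-\lambda b/3=1/(1+\frac b3\sqrt{2L/V})$ and $\lambda<3/b$. Writing $u:=\sqrt{2L/V}$ and $c:=b/3$, a direct computation gives
\[
\lambda t=\frac{u\sqrt{2VL}+cuL}{1+cu}\qquad\text{and}\qquad \frac{\lambda^2V}{2(1-\lambda c)}=\frac{u^2V}{2(1+cu)}=\frac{L}{1+cu}.
\]
Since $u\sqrt{2VL}=2L$, the difference equals
\[
\frac{2L+cuL-L}{1+cu}=L.
\]
The probability bound is therefore exactly $\delta$. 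The degenerate case $V=0$ is trivial: then $\sum_i X_i=0$ almost surely.

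\emph{Step 4 (conclusion).} Taking the complement yields $\sum_i X_i\le \sqrt{2V\log(1/\delta)}+\frac b3\log(1/\delta)$ with probability at least $1-\delta$.

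For the two-sided form used in the paper (with $\log(2SA/\delta)$), apply the bound to both $X_i$ and $-X_i$ and take a union bound.
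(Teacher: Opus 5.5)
Your proof is correct. The moment bound via $k!\ge 2\cdot 3^{k-2}$, the Chernoff step, and your explicit choice $\lambda=u/(1+cu)$ all check out, and that choice makes the exponent exactly $-\log(1/\delta)$; the degenerate case $V=0$ is also handled. The paper gives no proof of this lemma and simply quotes it as a standard tool among the supporting lemmas, so there is no argument to compare against. Your Cram\'er--Chernoff derivation is the textbook proof it relies on, including the union-bound step that yields the two-sided $\log(2SA/\delta)$ form used in the proof of Theorem~\ref{thm:MBRSQVI_UB_Q}.
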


\begin{lemma}[{The Bhatia--Davis Inequality \citep{bhatia2000better}}] \label{lem:Bhatia-Davis}
Let $X$ be a random variable such that $m \le X \le M$ almost surely, and let $\mu = \mathbb{E}[X]$. Then
\[
\operatorname{Var}(X) \le (M-\mu)(\mu-m).
\]
In particular,
\[
\operatorname{Var}(X) \le \frac{(M-m)^2}{4}.
\]
\end{lemma}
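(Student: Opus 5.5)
The statement is the Bhatia--Davis inequality (Lemma~\ref{lem:Bhatia-Davis}). The plan is a one-line pointwise argument followed by taking expectations. Since $m\le X\le M$ almost surely, the product $(M-X)(X-m)$ is almost surely nonnegative. I will therefore start from
\[
0\le \E\big[(M-X)(X-m)\big],
\]
and expand the right-hand side in terms of $\mu=\E[X]$ and $\E[X^2]$.

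Expanding gives
\[
\E\big[(M-X)(X-m)\big] = (M+m)\mu - Mm - \E[X^2].
\]
Substituting $\E[X^2]=\operatorname{Var}(X)+\mu^2$ yields
\[
\operatorname{Var}(X) = (M+m)\mu - Mm - \mu^2 - \E\big[(M-X)(X-m)\big].
\]
The first three terms factor as $(M-\mu)(\mu-m)$, and the last term is nonpositive by the first step. This gives the main inequality $\operatorname{Var}(X)\le (M-\mu)(\mu-m)$. Integrability is automatic because $X$ is bounded.

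For the ``in particular'' claim, I will maximize the quadratic $\mu\mapsto (M-\mu)(\mu-m)$ over $\mu\in[m,M]$. Since $\mu\in[m,M]$ by monotonicity of expectation, and the quadratic is concave with its maximum at the midpoint $\mu=(M+m)/2$, its value there is $(M-m)^2/4$. An equivalent route is AM--GM applied to the two nonnegative factors, whose sum is $M-m$.

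I do not expect any step to be a real obstacle. The only point needing care is the algebraic regrouping that identifies $(M+m)\mu-Mm-\mu^2$ as $(M-\mu)(\mu-m)$, which I will verify by direct expansion.
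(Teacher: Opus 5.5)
Your proof is correct and complete: taking expectations of the almost-sure bound $(M-X)(X-m)\ge 0$ and regrouping $(M+m)\mu-Mm-\mu^2=(M-\mu)(\mu-m)$ gives the main inequality, and AM--GM on the two factors (which sum to $M-m$) gives the corollary. The paper itself gives no proof of this lemma and only cites Bhatia and Davis; your route is the standard one-line proof, and essentially the argument of that source, which applies the same positivity trick with a positive unital linear map in place of the expectation.
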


\end{document}